% This is samplepaper.tex, a sample chapter demonstrating the
% LLNCS macro package for Springer Computer Science proceedings;
% Version 2.20 of 2017/10/04
%

\documentclass[runningheads]{llncs}
\usepackage{graphicx}
\usepackage{amssymb}
\usepackage{amsmath}
\usepackage{booktabs}
\usepackage{float}
\usepackage{multicol}
\usepackage{multirow}
\usepackage{subfigure}
\usepackage[inkscapelatex=false]{svg}

\newcommand{\topcaption}{%
\setlength{\abovecaptionskip}{0pt}%
\setlength{\belowcaptionskip}{10pt}%
\caption}

% Used for displaying a sample figure. If possible, figure files should
% be included in EPS format.
%
% If you use the hyperref package, please uncomment the following line
% to display URLs in blue roman font according to Springer's eBook style:
% \renewcommand\UrlFont{\color{blue}\rmfamily}

\begin{document}
\title{Balanced Graph Structure Learning for Multivariate Time Series Forecasting}
%
%\titlerunning{Abbreviated paper title}
% If the paper title is too long for the running head, you can set
% an abbreviated paper title here
%

\author{Weijun Chen \and
Yanze Wang \and
Chengshuo Du \and
Zhenglong Jia \and
Feng Liu\thanks{Corresponding Author} \and
Ran Chen}
\authorrunning{Weijun Chen et al.}
% First names are abbreviated in the running head.
% If there are more than two authors, 'et al.' is used.
%
\institute{
Beihang University
\\
\email{\{onceCWJ, king.donmn, chengshuodu711, gavinjzl23, liufeng001011, chenran0522\}@gmail.com}}

\maketitle              % typeset the header of the contribution
%PDF Rendering Error Something went wrong while rendering this PDF.

\begin{abstract}
Accurate forecasting of multivariate time series is an extensively studied subject in finance, transportation, and computer science. Fully mining the correlation and causation between the variables in a multivariate time series exhibits noticeable results in improving the performance of a time series model. Recently, some models have explored the dependencies between variables through end-to-end graph structure learning without the need for predefined graphs. However, current models do not incorporate the trade-off between efficiency and flexibility and lack the guidance of domain knowledge in the design of graph structure learning algorithms. This paper alleviates
the above issues by proposing Balanced Graph Structure Learning for Forecasting (BGSLF), a novel deep learning model that joins graph structure learning and forecasting. Technically, BGSLF leverages the spatial information into convolutional operations and extracts temporal dynamics using the diffusion convolutional recurrent network. The proposed framework balance the trade-off between efficiency and flexibility by introducing Multi-Graph Generation Network (MGN) and Graph Selection Module. In addition, a method named Smooth Sparse Unit (SSU) is designed to sparse the learned graph structures, which conforms to the sparse spatial correlations in the real world. Extensive experiments on four real-world datasets demonstrate that our model achieves state-of-the-art performances with minor trainable parameters. Code will be made publicly available.

\end{abstract}
\section{Introduction}

Today, our lives benefit significantly from various sensors in many fields, such as weather forecasting, transportation, hydrology, electricity, and many other forms of data. The multivariate time series (MTS) data generated by sensors has high practical value and attracts many scholars to participate in the research. MTS forecasting is vital for a learning system that operates in an evolving environment. There is already some valuable work on this aspect~\cite{GDN,DCRNN,MTGNN}.

A fundamental assumption in MTS forecasting is the correlations between variables, which means that a variable's future information depends not only on its historical information but also on the historical information of other variables. Traditional methods, such as autoregressive integrated moving average (ARIMA)~\cite{ARIMA} and vector auto-regression (VAR)~\cite{VAR}, are used in many time series forecasting tasks.
However, these models are insufficient to mine intricate spatial-temporal dynamics or model nonlinear dependencies between MTS data. Recently, some researchers have shifted to deep learning and concentrated on exploiting prominent temporal patterns shared by MTS, such as TPA-LSTM~\cite{TPA-LSTM} and LSTNet~\cite{LSTNet}. These methods have a strong capability in modeling temporal dynamics but lack the ability to capture dynamic spatial relationships.

In the MTS forecasting tasks, effectively modeling and utilizing the correlations between variables is still a challenging problem. Graph neural networks (GNNs) have shown high capability in handling relational dependencies due to their compositionality, local connectivity, and permutation-invariance, so some early work~\cite{DCRNN,MRA-BGCN} has attempted to introduce them into MTS forecasting. However, these graph neural network methods require a predefined graph structure, and the predefined graph structures generally are local and static. Hence, they ignore the long-range dependencies of some nodes and fail to consider the dynamic property of MTS data. Moreover, we cannot obtain such an underlying graph structure in many cases. The method of graph structure learning (GSL) has been proposed to solve the above questions and attracted much attention~\cite{RobustSurvey}. Graph structure learning aims to learn the optimal graph structure and corresponding representation jointly. Furthermore, some literature~\cite{SLAPS,LDS,GTS} has revealed that joint graph learning and downstream tasks are better than directly using predefined graphs, partly due to the noise in the predefined graph structure.

Recent models apply graph structure learning to MTS forecasting and achieve promising results. These representative models are MTS forecasting with GNNs (MTGNN)~\cite{MTGNN}, Graph for Timeseries (GTS)~\cite{GTS}, Adaptive Graph Convolutional Recurrent Network (AGCRN)~\cite{AGCRN}, Graph WaveNet (GWN)~\cite{GWN}, Spatial-temporal attention wavenet (STAWnet)~\cite{STAWnet} , Graph Deviation Network (GDN) and Neural Relational Inference (NRI)~\cite{NRI}. Despite promising results of joint graph structure learning and forecasting in current models, we argue that these approaches face three major shortcomings.

First, the current models do not consider the trade-off between efficiency and flexibility. The current models either learn a graph adjacency matrix globally (shared by all time series) or infer an adjacency matrix for each batch. The former can be more efficient but less flexible as we cannot adjust the graph for different inputs during testing. On the other hand, the latter enjoy more flexibility but less efficiency as we need to allocate much memory to store the individual adjacency matrices.

Second, current models like GDN, MTGNN, GWN, and AGCRN essentially generate graph adjacency matrices through random initialization and refine the graph structure through end-to-end learning. Although some other models such as GTS and STAWnet apply training sets of MTS for graph inference, they do not apply substantial domain knowledge to fully mine the correlations in multivariate time series, resulting in poor interpretability and easy overfitting.  

Third, some models~\cite{MTGNN,GDN} apply non-differentiable functions to get sparse graph matrices, which are of high gradient variance bringing increased end-to-end training difficulty. 

To emphasize the issues mentioned above, we propose a concise yet practical graph structure learning framework for multivariate time series forecasting. Our model considers the balance between efficiency and flexibility and integrates domain knowledge into the graph structure learning module. Moreover, we propose a new method to obtain sparse and continuous graph matrices. The main contributions of our works are as follows:
\begin{itemize}
\item We propose a model called \emph{Balanced Graph Structure Learning for Forecasting} (BGSLF), which follows a different route with the aim of learning smooth and sparse dependencies between variables while simultaneously training the forecasting module. Different from other models, our model can generate a specified number of graphs through the graph structure learning module to balance efficiency and flexibility. Furthermore, we can select the best graph structure for forecasting during training and testing by measuring the similarity between the time series variables and all graphs.

\item In the graph structure learning module, in order to learn graphs that are more adaptable to prediction, we incorporate some concise yet compelling domain knowledge into MTS forecasting. Through our method, we save lots of parameters and improve forecasting performances. 

\item Inspired by~\cite{SmoothManifolds}, we propose the Smooth Sparse Unit (SSU) intending to infer continuous and sparse dependencies between variables. With the aid of SSU, the use of non-differentiable functions (e.g., Top-$K$ operation) or regularization in inferring sparse graph structures can be avoided.

\item We conduct extensive experiments on four real-world multivariate time series datasets,  PEMS04, PEMS08, METR-LA, Solar-Energy. As a result, the proposed model achieves state-of-the-art results with minor trainable parameters.   

\end{itemize}
 
\section{Related Work}

\subsection{Spatial-temporal Graph Networks}
The graph neural network has achieved great success in capturing spatial relationships. In order to capture this spatial connection, a variety of different methods have been proposed~\cite{GCN,GAT}. Most of these methods essentially follow the neighborhood aggregation strategy, in which the node representation updates itself by iteratively aggregating the representation of the neighbors in the graph. Recently, to solve the complicated spatial and temporal connections in traffic prediction and skeleton-based action recognition, spatial-temporal graph networks were proposed and achieved superior results. The input of spatial-temporal graph networks is usually a multivariate time series and an additionally given adjacency matrix. They aim to predict future values or labels of multivariate time series. The objective of the spatial-temporal graph networks is to make full use of the structural information to achieve the optimal forecasting effect. DCRNN~\cite{DCRNN} uses diffusion convolution and encoder-decoder structures to capture spatial and temporal relationships, respectively.
ASTGCN~\cite{ASTGCN} adopts the spatial-temporal attention mechanism to model dynamic spatial-temporal correlations in traffic data. More recent works such as MRA-BGCN~\cite{MRA-BGCN} and GMAN~\cite{GMAN} further add more complicated spatial and temporal attention mechanisms with GCN~\cite{GCN} to capture the dynamic spatial and temporal correlations. The above methods were successful in traffic prediction at the time. However, these models are limited by the predefined graph structure, so these methods' performance and generalization ability can be improved.

\subsection{Graph Structure Learning}
Graph representation learning is the core of many forecasting tasks, ranging from traffic forecasting to fraud detection. Many graph neural network methods are susceptible to the quality of the graph structure and require a perfect graph structure for learning embeddings. We discuss selected work about multivariate time series forecasting and refer the reader to~\cite{RobustSurvey} for a complete survey. Most of the recent works consider spatial and temporal modules separately. GWN~\cite{GWN} captures the spatial dependency by training an adaptive adjacency matrix. STAWnet~\cite{STAWnet} takes the method of attention to get self-learned node embedding so as to capture the spatial dependency. Both STAWnet and GWN use dilated casual convolutions~\cite{TCN} as the temporal forecasting module. MTGNN learns two embedding vectors per node and obtains the graph adjacency matrix through mathematical transformation. Similar to MTGNN, GDN infers the graph by learning a node embedding per node and builds a $k$NN graph~\cite{KNN} where the similarity metric is the cosine of a pair of embeddings. AGCRN~\cite{AGCRN} proposes two adaptive modules for enhancing graph convolutional networks to infer dynamic spatial dependencies among different traffic series. LDS~\cite{LDS} approximately solves a bilevel programming problem to jointly learn the Bernoulli distribution of the adjacency matrices and the parameters of graph convolutional networks. In order to reduce expensive computation in LDS, GTS no longer regards the graph structure as hyperparameters for optimization but transforms the problem into unilevel programming. Moreover, some studies~\cite{SLAPS,Study} have revealed that there does not exist a golden standard for evaluating the quality of the learned graph structure except for forecasting accuracy.

\begin{figure*}
    \centering
    \includegraphics[width=\textwidth]{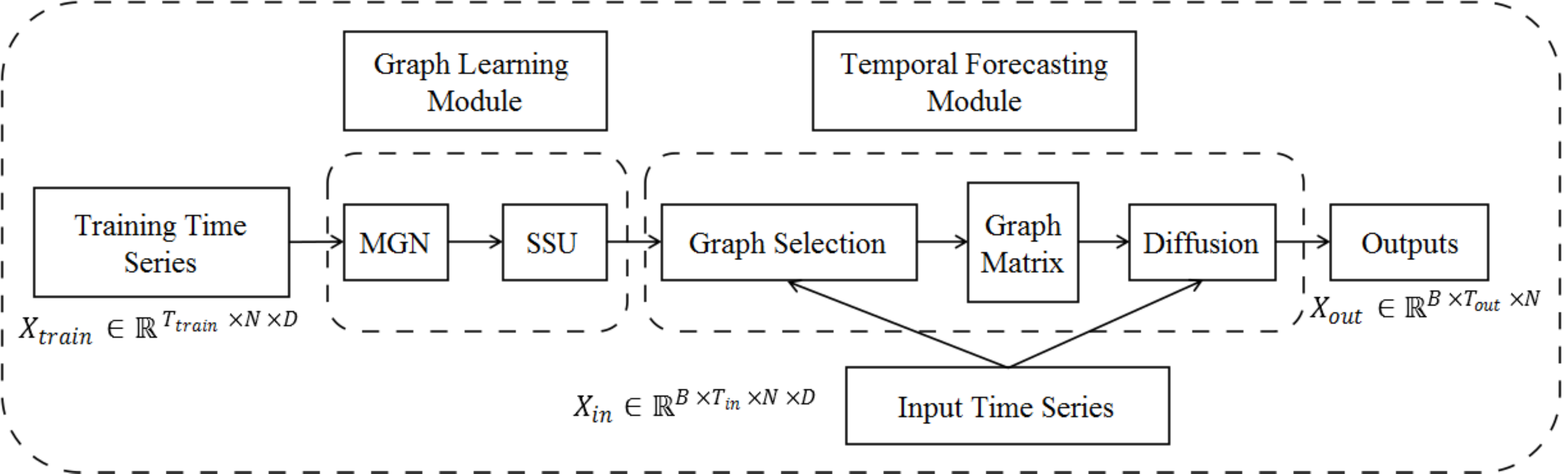}
    \caption{The framework of BGSLF. The training time series are passed through the graph learning module to generate the graph set $\mathbb{A}$, containing two components, Multi-Graph Generation Network (MGN) and Smooth Sparse Unit (SSU). Graph selection is performed in the temporal forecasting module by measuring the similarity between the input time series and the graphs in $\mathbb{A}$. Finally, the selected graph matrix and the input data are fed into the diffusion convolutional network to produce the output.}
    \label{framework}
\end{figure*} 

\section{Methodology}
In this section, we first give a mathematical description of the problem we are addressing in this paper. Next, we describe two building blocks of our framework, the graph structure learning module and the temporal forecasting module. They work together to capture the spatial-temporal dependencies. Finally, we outline the architecture of our framework.

\subsection{Problem Formulation}
In this paper, we focus on exploiting graph structure learning to improve the accuracy of multivariate time series forecasting. Let $x_t \in \mathrm{R}^{N\times D}$ represent the value of a multivariate variable of dimension $N$ at time step $t$, and $D$ denote the feature dimension, where $ x_t[i] \in \mathbb{R}^{D} $ denotes the $i^{th}$ variable at time step $t$. Given the historical $M$ time steps observation sequence of a multivariate variable $\textbf{X} = \{\textbf{x}_{t_1},\textbf{x}_{t_2},\cdots,\textbf{x}_{t_M}\}$,  our goal is to predict the future $N$-step numerical sequence $\textbf{Y} = \{\textbf{x}_{t_{M+1}},\textbf{x}_{t_{M+2}},\cdots,\textbf{x}_{t_{M+N}}\}$. Note that we do not need a predefined graph structure here.
Specifically, let $X_{\text{train}}$ and $X_{\text{valid}}$ denote the training and validation sets of multivariate time series respectively, $A \in \mathbb{R}_{+}^{K\times K}$ is the adjacent matrix of the graph representing the proximity between $K$ time series, $\omega$ denote the parameters used in the GNN and $L$ and $F$ denote the loss functions used during training and validation respectively, the use of graph structure learning for MTS forecasting has a bilevel programming architecture as
\begin{equation}
\begin{aligned}
&\mathop{\text{min}}\limits_{A,\omega_A} \ F(\omega_A,A,X_{\text{valid}}),\\
&\ \text{s.t.} \ \ \ \ \,\omega_A \in \text{arg} \mathop{\text{min}}\limits_{\omega} L(\omega,A,X_{\text{train}}).
\end{aligned}
\end{equation}

Intuitively, the hierarchical relationship results from the fact that the mathematical program related to the parameters of graph structure learning is part of the constraint of the temporal forecasting module. However, the bilevel program problem is naturally difficult to solve. Even for the simplest example, the linear-linear bilevel programming is proved to be NP difficult~\cite{OPT}. Therefore, we need to make some approximations to the original bilevel problem. Similar to~\cite{GTS}, we consider approximating the bilevel programming to a unilevel programming problem as
\begin{equation}
 \mathop{\text{min}}\limits_{A(w)} \ F(w,A,X_{\text{train}}).
\end{equation}

Because this approach owns the freedom to design the parameterization and can better control the number of parameters compared to an inner optimization $w_A$. Therefore, the design of a reasonable parameterization approach is crucial for the graph structure learning module.

\subsection{Graph Structure Learning Module}
Graph Structure Learning is an essential operation when graph structure is missing or incomplete. LDS gives the first mathematical description of bilevel programming applying graph structure learning to downstream tasks. GTS approximates the bilevel optimization to the unilevel optimization. Both apply Bernoulli sampling to generate discrete adjacency matrix $A\in\{0,1\}^{K\times K}$. However, binary values can not represent the rich correlations between variables. Therefore, in our model, instead of constructing discrete adjacency matrices based on Bernoulli distribution, we construct continuous adjacency matrices with each entry $A_{i,j}\in [0,1]$.
The training set of MTS is represented by $X_{\text{train}} \in \mathbb{R}^{T_{\text{train}} \times N \times D} $ where $T_{\text{train}}$ denotes the number of training time steps, $N$ is the number of variables and $D$ denotes the feature dimension. Given a graph adjacency matrix $A$ and its historical $W$ step graph signals, our problem is to learn a function
$\mathcal{F}_A$ which is able to forecast its next $H$ step graph signals. The overall forecasting function can be written as
\begin{align}
&[\mathbf{X}_{t-W+1:t},A]\xrightarrow{\mathcal{F}_A}\mathbf{X}_{t+1:t+H}.
\end{align}

\subsubsection{Multi-Graph Generation Network}
In the graph structure learning module, our purpose is to extract dynamic spatial relationships between variables from the training MTS. Changes in the values of different variables at cross-time can better reflect the spatial relationships between variables. In the transportation domain, for example, the numerical changes of the sensors over time offer insights into how traffic dynamics propagate along with the network. Therefore, we first do the difference operation on the training MTS in order to reveal more moderate correlations:
\begin{equation}
\begin{aligned}
\mathcal{D}iff(\mathbf{X}_{:,1},\mathbf{X}_{:,2},\mathbf{X}_{:,3},\cdots,\mathbf{X}_{:,T})&=
\{\mathbf{X}_{:,2}-\mathbf{X}_{:,1},\mathbf{X}_{:,3}-\mathbf{X}_{:,2},\cdots,\mathbf{X}_{:,T}-\mathbf{X}_{:,T-1}\},\\
&  
\triangleq \{\mathbf{\hat{X}}_{:,1},\mathbf{\hat{X}}_{:,2}, ...,\mathbf{\hat{X}}_{:,T-1}\}.
\end{aligned}
\end{equation}

Then, considering the periodicity of the time series, we set a hyper-parameter period $ P $ to segment the training MTS into $ S = \lfloor T_{\text{train}}/P \rfloor$ segments, each containing time series $\mathbf{\hat{X}}_i \in \mathbb{R}^{N \times{D} \times {P}}, i = 1,2,...,S. $
After obtaining the time-series segments, we concatenate these segments to obtain a four-dimensional tensor $\mathcal{O} = [\mathbf{\hat{X}_1},\mathbf{\hat{X}_2}......\mathbf{\hat{X}_S}]\in \mathbb{R}^{S\times N \times D \times P}$. 
Subsequently, we use 2D convolution and two fully connected layers to transform the four-dimensional tensor $\mathcal{O}$ to obtain $R$ graphs. The number of input channels is $S$, and the number of output channels is the number of graphs $R$ we aim to get. These graphs constitute the graph set $\mathbb{A}$.

\subsubsection{Smooth Sparse Unit}
In this section, we propose the Smooth Sparse Unit (SSU) to learn continuous and sparse graphs instead of using the discrete adjacency matrix produced by Gumbel-softmax sampling~\cite{GumbelSoftmax} in GTS. Inspired by Lee~\cite{SmoothManifolds}, the mathematical principles of SSU are as follows:
\begin{lemma}\label{lemma1}
\cite{SmoothManifolds} The function $f : \mathbb{R} \to \mathbb{R}$ defined by
\begin{equation}\label{f}
 f(x)=\left\{
\begin{aligned}
&e^{-\frac{1}{x}} &(x>0), \\
&0 &(x \leq 0),
\end{aligned}
\right.
\end{equation}
is smooth.
\end{lemma}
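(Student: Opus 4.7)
The plan is to verify smoothness away from the origin by routine composition arguments, then confirm smoothness at $x=0$ by induction on the order of the derivative. For $x<0$, $f$ is identically zero, hence $C^\infty$ on $(-\infty,0)$. For $x>0$, $f(x)=e^{-1/x}$ is a composition of $C^\infty$ functions on $(0,\infty)$, hence $C^\infty$ there as well. So the only real content of the lemma lies at the single point $x=0$, where left and right behaviors must be reconciled.

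The key algebraic observation I would establish first, by induction on $n$, is that for $x>0$ there exists a polynomial $P_n$ with $\deg P_n = 2n$ such that
\begin{equation*}
f^{(n)}(x) = P_n\!\left(\tfrac{1}{x}\right) e^{-1/x}.
\end{equation*}
The base case is $P_0(t)=1$. For the inductive step, differentiating $P_n(1/x)e^{-1/x}$ gives $\bigl(-x^{-2} P_n'(1/x) + x^{-2} P_n(1/x)\bigr) e^{-1/x}$, so one sets $P_{n+1}(t) = t^2\bigl(P_n(t) - P_n'(t)\bigr)$, which has degree $2n+2$, closing the induction.

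Next I would establish the decay estimate $\lim_{x\to 0^+} x^{-k} e^{-1/x} = 0$ for every $k\ge 0$, which follows immediately from substituting $u = 1/x$ and noting $u^k e^{-u}\to 0$ as $u\to\infty$ (the exponential dominates any polynomial). Combined with the formula above, this yields $\lim_{x\to 0^+} f^{(n)}(x) = 0$ for every $n$, while the corresponding left-hand limit is trivially $0$.

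Finally I would show by induction that $f^{(n)}(0)$ exists and equals $0$ for all $n\ge 0$. Assuming the claim for $n$, the difference quotient at $0$ from the right is $\bigl(f^{(n)}(h) - 0\bigr)/h = h^{-1} P_n(1/h) e^{-1/h}$, which again tends to $0$ by the decay estimate; from the left it is identically $0$. Hence $f^{(n+1)}(0)=0$, and continuity of $f^{(n+1)}$ at $0$ follows from the one-sided limits just computed. The main obstacle is really bookkeeping: setting up the inductive polynomial identity cleanly so that the decay estimate can be invoked uniformly for both the pointwise limit of $f^{(n)}$ and the difference quotient producing $f^{(n+1)}(0)$. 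Once that is in place, smoothness of $f$ on all of $\mathbb{R}$ follows.
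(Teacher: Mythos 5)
Your proof is correct and complete: the inductive formula $f^{(n)}(x)=P_n(1/x)e^{-1/x}$ with $P_{n+1}(t)=t^2\bigl(P_n(t)-P_n'(t)\bigr)$, combined with the decay estimate $x^{-k}e^{-1/x}\to 0$, handles both the pointwise limits and the difference quotients at the origin. The paper itself gives no proof of this lemma, deferring entirely to the cited reference, and your argument is precisely the standard one found there.
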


\begin{lemma}
There exists a smooth function $\varphi : \mathbb{R} \to [0,1]$ such that $\varphi(x) \equiv 0 \ \text{for}\  x \leq 0;\varphi(x) \in (0,1)\ \text{for}\ 0<x<1;\varphi(x) \equiv 1 \ \text{for}\ x \geq 1$.
\end{lemma}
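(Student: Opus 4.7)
The plan is to build $\varphi$ explicitly from the smooth function $f$ of Lemma~\ref{lemma1} by a standard ``partition of unity'' style quotient. First I would form the auxiliary function
\begin{equation*}
g(x) \;=\; \frac{f(x)}{f(x)+f(1-x)},
\end{equation*}
and then set $\varphi := g$. The motivation is that $f(x)$ vanishes exactly on $(-\infty,0]$ while $f(1-x)$ vanishes exactly on $[1,\infty)$, so the numerator and denominator encode the desired support conditions directly.

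Next I would verify that the denominator $h(x) := f(x)+f(1-x)$ is strictly positive on all of $\mathbb{R}$. This is a simple case split: if $x \leq 0$, then $1-x \geq 1 > 0$, so $f(1-x) > 0$ by Lemma~\ref{lemma1}; if $x \geq 1$, then $f(x) > 0$; and if $0 < x < 1$, both summands are positive. Since $f \geq 0$ everywhere, $h(x) > 0$ on $\mathbb{R}$. Combined with the smoothness of $f$ (Lemma~\ref{lemma1}) and smoothness of $x \mapsto 1-x$, the chain and quotient rules give that $g$ is smooth on $\mathbb{R}$.

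Finally I would check the three prescribed values. For $x \leq 0$ the numerator $f(x)=0$, so $\varphi(x)=0$. For $x \geq 1$ we have $f(1-x)=0$, so $\varphi(x) = f(x)/f(x) = 1$. For $0<x<1$ both $f(x)$ and $f(1-x)$ lie in $(0,\infty)$, so $0 < f(x) < f(x)+f(1-x)$, giving $\varphi(x) \in (0,1)$. In particular $\varphi$ maps into $[0,1]$, completing the proof.

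The main subtle point is the smoothness of $\varphi$ at the transition points $x=0$ and $x=1$: here one must rely on Lemma~\ref{lemma1} to ensure that $f$ has all derivatives vanishing from the left at $0$, so that the quotient does not develop any singular behavior as the numerator (or, by symmetry, the complementary summand in the denominator) switches on; the nonvanishing of $h$ makes this rigorous via the quotient rule rather than requiring a case-by-case Taylor analysis.
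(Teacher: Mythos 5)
Your construction is correct and is essentially the same as the paper's: the paper takes $\varphi(x)=\frac{\alpha f(x)}{\alpha f(x)+f(1-x)}$ with a positive constant $\alpha$ (used later for sparsification control), of which your $g$ is the $\alpha=1$ case, and your verification of the positivity of the denominator and of the boundary values is exactly what the paper leaves as ``easy to check.''
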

\begin{proof}
Let
\begin{equation}\label{phi}
\varphi(x)=\dfrac{\alpha f(x)}{\alpha f(x)+f(1-x)}\ (\alpha \in \mathbb{R}_+),
\end{equation}
where $f \in C^{\infty}(\mathbb{R})$ is defined by Equation~\eqref{f}. 
It is easy to check that $\varphi(x) \in [0,1]$,  and $\varphi(x)$ is 0 for $x \leq 0$, 1 for $x \geq 1$.
\end{proof}

Using the above mathematical formula, the output adjacency matrix $A$ is $$A = \dfrac{\alpha f(G)}{\alpha f(G)+f(\mathbf{1}-G)},$$
where $G \in \mathbb{R}^{N \times N}$ is the output of the fully connected layers, $\mathbf{1}$ denotes the all one matrix, $\alpha$ is the sparsification coefficient (hyper-parameter), $f$ is an element-wise operator defined by Lemma~\ref{lemma1}, and $A$ is the graph that we finally learned.

More details for controlling the sparsity effect of $\alpha$ and techniques of redefining gradients to accelerate convergence are presented in Appendix~\ref{effect}.

\subsection{Temporal Forecasting module}

\subsubsection{Graph Selection}
After obtaining $R$ graphs through the MGN and SSU, the optimal graph structure should be selected for each input time series $X_{\text{in}} \in \mathbb{R}^{B \times T_{\text{in}} \times N \times D}$, and we use the following objective function to represent:
\begin{equation}
    A=\underset{A_i\in \mathbb{A}}{arg\max}\,\cos{\left< \mathcal{X}^T\mathcal{X},A_i \right>},
\end{equation}
where $\cos{\left< \mathcal{X},\mathcal{Y} \right>}=\frac{\sum\limits_{i,j}{x_{ij}y_{ij}}}{\sqrt{\sum\limits_{i,j}{x_{ij}^2}\cdot \sum\limits_{i,j}{y_{ij}^2}}}$, $\mathcal{X}=\sum\limits_{i=1}^B \sum\limits_{j=1}^D{{X_{\text{in}}}_{\left[ i,:,:,j \right]}}\in \mathbb{R}^{T_{\text{in}} \times N}$ for each batch and $B$ is the batch size.

We use the scalar product to calculate the correlation between nodes in the input data $X_{\text{in}}$, and $\cos{\left< \cdot \right>}$ to measure the similarity between graphs and input data, so as to select the most suitable graph for training.

\begin{figure}[h]
	\centering
	\subfigure[The left figure shows that in DCRNN, the information transfer between node $u$ and node $z$ requires three-step diffusion to realize.]{
	\begin{minipage}[b]{.45\linewidth}
	\centering
	\includegraphics[width=0.7\columnwidth]{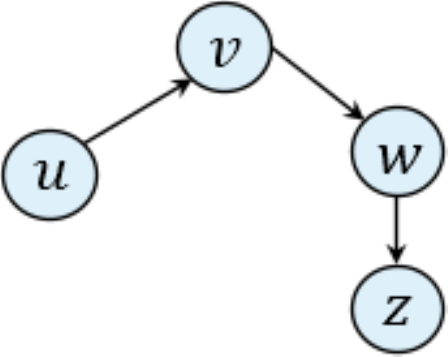}
	\end{minipage}
	}
	\ \ \ \ \ 
	\subfigure[The right picture is the adjacency matrix produced by our graph structure learning module. It learns the associations between all nodes, so there is no need for multi-step diffusion.]{
	\begin{minipage}[b]{.45\linewidth}
	\centering
	\includegraphics[width=0.7\columnwidth]{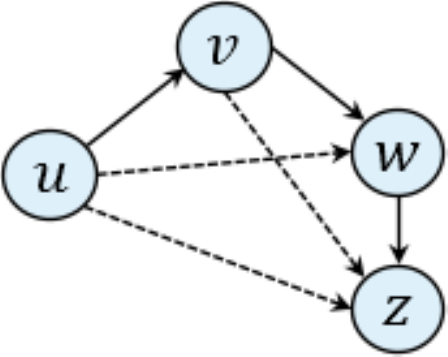}
	\end{minipage}
	}
	\caption{Information Transfer Patterns.}
	\label{DCL}
	\end{figure}

\subsubsection{Diffusion Convolutional Recurrent Network}
Due to its specific design for directed graphs, we adopt DCRNN~\cite{DCRNN} as our forecasting module to capture temporal trends. As is shown in Figure~\ref{DCL}, our graph learning module learns the adjacency relationship between any two nodes and carries out information transfer. Besides, multi-step diffusion results in node homogeneity, which leads to node feature confusion and parametric training difficulties. For this reason, we only use one-step diffusion convolutional operation, which is defined as
\begin{equation}
	W^Q_A X=\left(\omega_0^Q+\omega_1^Q(D_O^{-1}A)+\omega_2^Q(D_I^{-1}A)\right) X,
\end{equation}
with $D_O$ and $D_I$ being the out-degree and in-degree matrices and $\omega_0^Q$, $\omega_1^Q$, $\omega_2^Q$ being model parameters.

We leverage the recurrent neural networks (RNNs) with Gated Recurrent Units (GRU) to model the temporal dependency:
\begin{equation}
	\begin{aligned}
		&R_t\,=\text{sigmoid}\left(W_A^R(X_t||H_{t-1})+b_R\right),\\
		&C_t\,=\text{tanh}\left(W_A^C(X_t||R_t \odot H_{t-1})+b_C\right),\\
		&U_t\,=\text{sigmoid}\left(W_A^U(X_t||H_{t-1})+b_U\right),\\
		&H_t=U_t \odot H_{t-1}+(1-U_t) \odot C_t,
	\end{aligned}
\end{equation}
where $||$ is concatenation along the feature dimension and $\odot$ represents the element-wise product and $b_R$, $b_C$, $b_U$ are model parameters.

\subsection{Framework of BGSLF}
We present the framework of BGSLF in Figure~\ref{framework}. It consists of a graph structure learning module and a temporal forecasting module. The graph structure learning module consists of Multi-Graph Generation Network (MGN) and Smooth Sparse Unit (SSU). The temporal forecasting module contains the Graph Selection Module and the Diffusion Convolutional Recurrent Network. By choosing the appropriate graph during training and testing, the forecasting accuracy can be significantly improved. 
Different from previous models like GTS, AGCRN, and MTGNN, our model can generate a specified number of graphs based on the training MTS data in the graph structure learning module. Therefore, we need to select the most appropriate graph for the temporal forecasting module. Furthermore, it addresses the problem of poor flexibility caused by the global graph and the issue of poor computational efficiency caused by building one graph for each batch. At the same time, we propose SSU with two major parts, the sparsification coefficient part to control the sparse degree and the gradient redefinition technique part to accelerate convergence.

\section{Experiments}

\begin{table}
\centering
\topcaption{Dataset statistics.}
\begin{tabular}{l|ccccc}  
				\toprule         
				Datasets&\ \# Samples\ &\ \# Nodes&\ Sample Rate&\ Input Length&\ Output Length \cr
				\midrule 
				PEMS04&16,969&307&5 minutes&12&12 \cr
				PEMS08&17,833&170&5 minutes&12&12 \cr
				METR-LA&34,272&207&5 minutes&12&12 \cr
				Solar-Energy&52,560&137&10 minutes&12&12 \cr
				\bottomrule 
			\end{tabular}

\label{tab:dataset}
\end{table}

We verify BGSLF on four public multivariate time series datasets, PEMS04, PEMS08, METR-LA, and Solar-Energy. PEMS04 is collected by Caltrans Performance Measurement System (PEMS) and released in ASTGCN\cite{ASTGCN} consisting of average speed, traffic volume in San Francisco Bay Area. Time span is from January to February in 2018. Similar as PeMS04, PEMS08 consists of average speed,
traffic volume collected by PeMS in San Bernardino from July to August in 2016. METR-LA contains average traffic speed measured by 207 sensors on the highway of Los Angles Country ranging from Mar 2012 to Jun 2012. Solar-Energy contains the solar power output from 137 PV plants in Alabama State in 2007. Z-score normalization is applied to inputs. We adopt the same data pre-processing procedures as in~\cite{DCRNN}, and the datasets are split in chronological order with 70$\%$ for training, 10$\%$ for validation, and 20$\%$ for testing. We tune the hyperparameters on the validation data by grid search for BGSLF. We adopt the Adam optimizer, and the number of training epochs is set as 200. Detailed data statistics are provided in Table~\ref{tab:dataset}.

\subsection{Baselines}
Select the traditional and leading performance models. We compare BGSLF with the following models.
\begin{itemize}
\item HA Historical average, which models the traffic flow as a periodic process and uses the weighted average of previous periods as the prediction.   
\item VAR Vector Auto-Regression~\cite{VAR}. 
\item FC-LSTM Recurrent neural network with fully connected LSTM hidden units~\cite{FC-LSTM}. 
\item ASTGCN Attention-based spatio-temporal graph convolutional network, which further integrates spatial and temporal attention mechanisms to STGCN for capturing dynamic spatial and temporal patterns. We take its recent components to ensure the fairness of comparison~\cite{ASTGCN}
\item AGCRN Adaptive Graph Convolutional Recurrent Network. An adaptive graph convolutional network, which designs two adaptive modules for enhancing graph convolutional network with new capabilities~\cite{AGCRN}. 
\item DCRNN Diffusion convolutional recurrent neural network, which incorporates diffusion graph convolution with recurrent neural network in an encoder-decoder manner~\cite{DCRNN}.
\item MTGNN Multivariate time series forecasting with graph neural networks, which uses external features to generate self-adaptive graphs for downstream forecasting module~\cite{MTGNN}.
\item GTS Graph for time series, which aims to jointly learn a latent graph in the time series and use it for MTS forecasting~\cite{GTS}.
\end{itemize}
\begin{table}
\centering
\topcaption{Performance of BGSLF and baselines on four real-world datasets.}
\begin{tabular}{clccc|ccc|ccc}  
				\toprule

				\multirow{2}{*}{Data}&\multirow{2}{*}{ Models}&
				\multicolumn{3}{c}{Horizon 3}&\multicolumn{3}{c}{Horizon 6}&\multicolumn{3}{c}{Horizon 12}\cr 
				
				\cmidrule(lr){3-11}   	
				&&MAE&RMSE&MAPE  &MAE&RMSE&MAPE&MAE&RMSE&MAPE\cr
								\midrule
				\multirow{9}{*}{\rotatebox{90}{PEMS04}}
				&HA&24.50&39.83&16.58$\%$&24.50&39.83&16.58$\%$&24.50&39.83&16.58$\%$\cr
				&VAR&20.85&32.54&15.03$\%$&22.33&34.46&16.24$\%$&25.16&38.11&18.79$\%$\cr
				
				&FC-LSTM&22.33&34.09&18.91$\%$&25.87&39.27&19.94$\%$&34.09&50.27&25.08$\%$\cr
				
				&ASTGCN&19.74&31.50&12.95$\%$&21.49&34.32&13.88$\%$&25.74&40.73&16.22$\%$\cr
				
				&DCRNN&18.58&29.78&13.56$\%$&19.84&31.76&14.71$\%$&22.14&34.61&17.46$\%$\cr
				
				&MTGNN&23.40&33.25&31.75$\%$&24.15&34.53&32.47$\%$&25.99&37.58&32.34$\%$\cr
				
				&AGCRN&18.84&30.74&\bf{12.43}$\%$&19.53&31.92&\bf{12.92}$\%$&21.00&34.28&\bf{13.73}$\%$\cr
				&GTS&18.90&29.81&12.80$\%$&19.58&\bf{31.22}&13.48$\%$&20.96&\bf{33.30}&14.61$\%$\cr
				
				&BGSLF (ours)&\bf{18.49}&\bf{29.43}&12.68$\%$&\bf{19.42}&31.33&13.27$\%$&\bf{20.81}&33.51&14.29$\%$\cr
								\midrule
				\multirow{9}{*}{\rotatebox{90}{PEMS08}}
			    &HA&21.19&36.64&13.79$\%$&21.19&36.64&13.79$\%$&21.19&36.64&13.79$\%$\cr
			    
				&VAR&16.56&24.91&10.62$\%$&19.16&28.58&12.34$\%$&23.22&33.93&15.68$\%$\cr
				
				&FC-LSTM&17.43&26.87&13.64$\%$&20.82&32.28&12.59$\%$&27.61&41.52&17.75$\%$\cr
				
				&ASTGCN&16.03&24.75&10.26$\%$&17.83&27.58&11.20$\%$&21.58&32.80&12.97$\%$\cr
				
				&DCRNN&14.50&\bf{22.48}&9.76$\%$&15.55&24.38&10.45$\%$&17.58&27.30&11.62$\%$\cr
				
				&MTGNN&16.48&24.13&22.84$\%$&17.86&26.22&24.32$\%$&20.41&29.39&28.33$\%$\cr
				
				&AGCRN&15.33&24.06&\bf{9.72}$\%$&16.51&26.11&10.31$\%$&19.01&29.94&11.71$\%$\cr
				&GTS&14.94&23.09&9.77$\%$&15.66&24.33&10.53$\%$&17.07&27.03&11.71$\%$\cr
				
				&BGSLF (ours)&\bf{14.46}&22.58&9.96$\%$&\bf{15.02}&\bf{23.88}&\bf{10.00}$\%$&\bf{16.46}&\bf{26.46}&\bf{10.96}$\%$\cr
				
				\midrule 
    				\multirow{9}{*}{\rotatebox{90}{METR-LA}}
				&HA&4.15&7.77&12.90$\%$&4.15&7.77&12.90$\%$&4.15&7.77&12.90$\%$\cr
				&VAR&4.42&7.89&10.20$\%$&5.41&9.13&12.70$\%$&6.52&10.11&15.80$\%$\cr
				
				&FC-LSTM&3.44&6.30&9.60$\%$&3.77&7.23&10.90$\%$&4.37&8.69&13.20$\%$\cr
				
				&ASTGCN&3.01&5.85&8.16$\%$&3.53&7.14&10.16$\%$&4.25&8.60&12.80$\%$\cr
				
				&DCRNN&2.77&5.38&7.30$\%$&3.15&6.45&8.80$\%$&3.60&7.60&10.50$\%$\cr
				
				&MTGNN&2.69&5.18&6.86$\%$&3.05&6.17&8.19$\%$&3.49&7.23&9.87$\%$\cr
				
				&AGCRN&3.70&9.58&7.93$\%$&4.77&12.15&9.64$\%$&6.12&15.13&11.67$\%$\cr
				&GTS&2.64&\bf{4.95}&6.80$\%$&3.01&\bf{5.85}&8.20$\%$&3.41&\bf{6.74}&9.90$\%$\cr
				
				&BGSLF (ours)&\bf{2.59}&5.09&\bf{6.68$\%$}&\bf{2.97}&6.11&\bf{8.02$\%$}&\bf{3.38}&7.16&\bf{9.44$\%$}\cr

				\bottomrule 
			\end{tabular}
\begin{tabular}{clcc|cc|cc}  
				\toprule

				\multirow{2}{*}{Data}&\multirow{2}{*}{Models}&
				\multicolumn{2}{c}{Horizon 3}&\multicolumn{2}{c}{Horizon 6}&\multicolumn{2}{c}{Horizon 12}\cr 
				
				\cmidrule(lr){3-8}   	
				&&MAE&RMSE&MAE&RMSE&MAE&RMSE\cr
				
				\midrule 
				
				\multirow{9}{*}{\rotatebox{90}{Solar-Energy}}		&HA&6.11&8.74&6.11&8.74&6.11&8.74\cr
				&VAR&0.79&\bf{1.54}&2.55&3.92&4.17&6.12\cr
				
				&FC-LSTM&0.70&1.88&1.17&2.90&1.98&4.74\cr
				
				&ASTGCN&0.65&1.70&0.95&2.47&1.69&4.12\cr
				
				&DCRNN&0.58&1.60&0.84&2.24&1.37&3.64\cr
				
				&MTGNN&1.36&2.46&1.95&3.33&2.91&4.66\cr
				
				&AGCRN&0.58&1.60&0.84&2.24&1.40&3.67\cr
	        	&GTS&0.58&1.66&0.85&2.33&1.37&3.67\cr
				
				&BGSLF (ours)&\bf{0.55}&1.56&\bf{0.80}&\bf{2.16}&\bf{1.31}&\bf{3.53}\cr
				
				\bottomrule 
			\end{tabular}

\label{tab:performance1}
\end{table}
\subsection{Experimental Setups}
We implement our experiments on the platform PyTorch using 4 NVIDIA GeForce RTX 3090 GPUs. The hyper-parameters period $P$ and graphs of BGSLF are set to 288 and 2, respectively. The grid search strategy is executed to choose other hyper-parameters on validation. All of these methods are evaluated with three common metrics: mean absolute error (MAE), root mean square error (RMSE) and mean absolute percentage error (MAPE). Due to the non-uniform distribution of the solar power output of PV plants in spatial and temporal domains, there are many zeros in Solar-Energy. Hence, we only adopt MAE and RMSE in this dataset. We choose to use mean absolute error (MAE) as the training objective of BGSLF. Missing values are excluded both from training and testing. All the tests use 12 observed data points to forecast multivariate time series in the next 3, 6, and 12 steps. The initial learning rate is 3e-3 
with a decay rate of 0.1 per 6 epochs, and the minimum learning rate is 3e-5. Since DCRNN and ASTGCN require a predefined graph, the Solar-Energy dataset does not have one. Therefore, we apply the training multivariate time series to construct a $k$NN graph as the predefined graph structure.

\subsection{Experimental Results}
Table~\ref{tab:performance1} compares the performances of BGSLF and baseline models for Horizon 3, 6, and 12 ahead forecasting on PEMS04, PEMS08, METR-LA, and Solar-Energy datasets. On all four datasets collected at multiple locations and with different sampling rates, our proposed model achieves the start-of-the-art performance whether long-term or short-term, which demonstrates the effectiveness of our proposed models. It outperforms traditional temporal models including HA, VAR, and FC-LSTM by a large margin. Methods like AGCRN and MTGNN apply random initialization to initialize graph structures that lack latent spatial associations among multivariate time series and may fail to capture critical dependencies between nodes, resulting in performance degradation. In addition, the study~\cite{Study} also pointed out that the temporal forecasting part of MTGNN is not sensitive to the graph structure learning module, which means that the learned spatial relationship is insufficient. With respect to the second-best model GTS, we can observe that BGSLF achieves small improvement on METR-LA. However, it can be seen in Table~\ref{tab:parameter} that the number of parameters of our proposed model is nearly 190 times less than that of the GTS model, which saves much memory and dramatically speeds up the training and inference speed. In addition, from the performance of DCRNN and ASTGCN on Solar Energy, it is also an effective method to construct a $k$NN graph in the absence of a predefined graph structure. The design of multi-graph generation network and graph selection module will help our model to dynamically capture the spatial correlation between nodes and balance efficiency and flexibility. 
\begin{table}
\centering
\topcaption{Trainable Parameters of different graph structure learning based spatial-temporal models on METR-LA and PEMS04 when achieving the best results.}
\begin{tabular}{clc}  
				\toprule
				Data&Models&Parameters \cr
				
				\midrule 
				
				\multirow{4}{*}{\shortstack{METR-\\LA}}
				
				&MTGNN&405,452\cr
				
				&AGCRN&747,810\cr
				
	        	&GTS&38,478,291\cr
				
				&BGSLF&\bf{202,266}\cr
				
				\bottomrule 
			\end{tabular}
\ \ \ \ \ \ \ \ \ \ \ \ \ \ 
\begin{tabular}{clc}  
				\toprule
				Data&Models&Parameters \cr
				
				\midrule 
				
				\multirow{4}{*}{\shortstack{PEMS-\\04}}
				
				&MTGNN&549,100\cr
				
				&AGCRN&748,810\cr
				
	        	&GTS&19,125,459\cr
				
				&BGSLF&\bf{229,502}\cr
				
				\bottomrule 
			\end{tabular}
\label{tab:parameter}
\end{table}
\subsection{Effect of the SSU module}
Detailed proof of SSU validity is given in Appendix~\ref{effect}. In Table~\ref{tab:activation}, we replace the SSU with different functions and compare the results. The experimental results prove the correctness of our idea. Firstly, the expressive power of the continuous matrix is indeed better than that of the discrete matrix, which is why the effect obtained by continuous function significantly outweighs Gumbel-softmax. In addition, due to the unique smoothness design and gradient redefinition method, the effect of SSU is better than other common activation functions in this task. The dependencies among different detectors learned by the graph structure learning module are visualized in Figure~\ref{heat}. The upper heat maps in Figure~\ref{heat} show the initial graphs obtained when the model starts training, and the lower represent the graph structure obtained when the model finishes training. It can be seen that our model successfully learned the sparse relationship between detectors, which corresponds to
the sparse spatial correlations in the real world.

\begin{table}
\centering
\topcaption{Performance of different functions on METR-LA.}
\begin{tabular}{llccc|ccc|ccc}  
				\toprule

				\multirow{2}{*}{Data}&\multirow{2}{*}{Function}&
				\multicolumn{3}{c}{Horizon 3}&\multicolumn{3}{c}{Horizon 6}&\multicolumn{3}{c}{Horizon 12}\cr 
				
				\cmidrule(lr){3-11}   	
				&&MAE&RMSE&MAPE  &MAE&RMSE&MAPE&MAE&RMSE&MAPE\cr
				
				\midrule 
    			\multirow{4}{*}{\shortstack{METR-\\LA}}
				
				&Sigmoid&2.68&5.30&6.94$\%$&3.07&6.34&8.32$\%$&3.48&7.41&9.78$\%$\cr
				
				&Tanh&2.60&5.12&6.79$\%$&2.99&6.18&8.38$\%$&3.40&7.23&9.99$\%$\cr
				&Gumbel-softmax&2.85&5.72&7.54$\%$&3.41&7.02&9.72$\%$&4.15&8.52&12.78$\%$\cr
				
				&SSU&\bf{2.59}&\bf{5.09}&\bf{6.68$\%$}&\bf{2.97}&\bf{6.11}&\bf{8.02$\%$}&\bf{3.38}&\bf{7.16}&\bf{9.44$\%$}\cr
				\bottomrule 
			\end{tabular}

\label{tab:activation}
\end{table}

\begin{figure}[H]
	\centering
	\subfigure[The initial and final states of the first graph adjacency matrix.]{
	\begin{minipage}[b]{.45\linewidth}
	\centering
	\includegraphics[width=1\columnwidth]{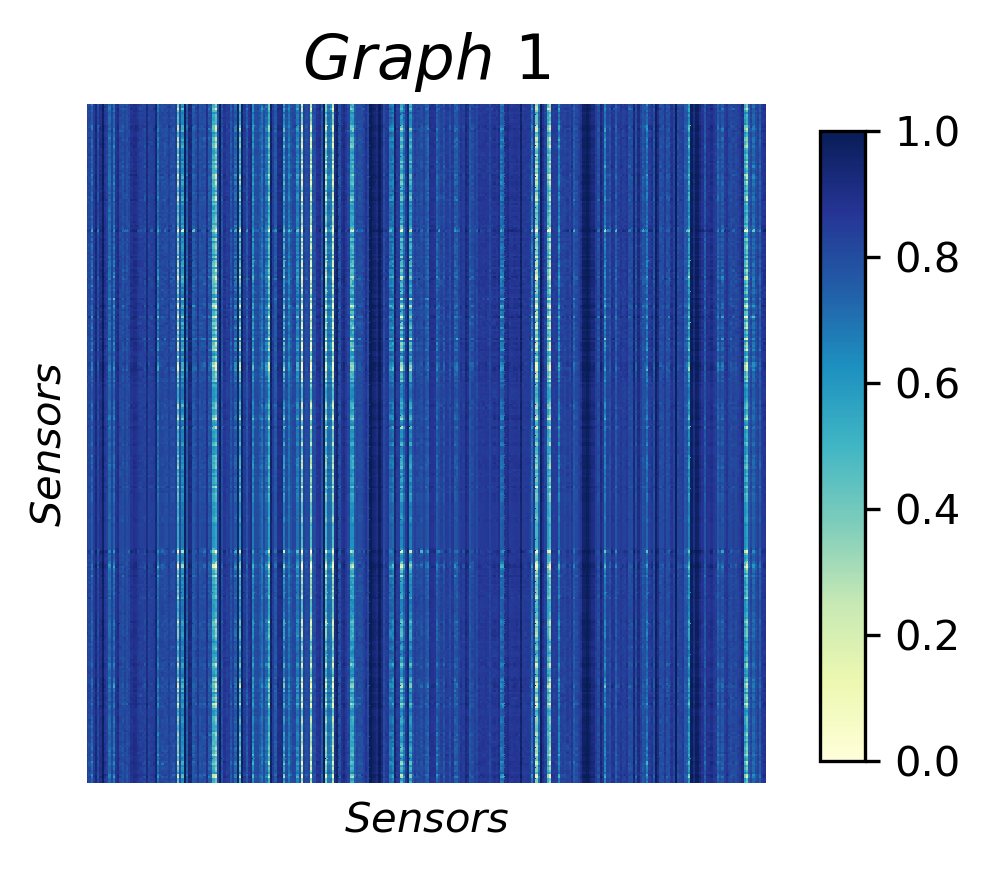}\\
	\includegraphics[width=1\columnwidth]{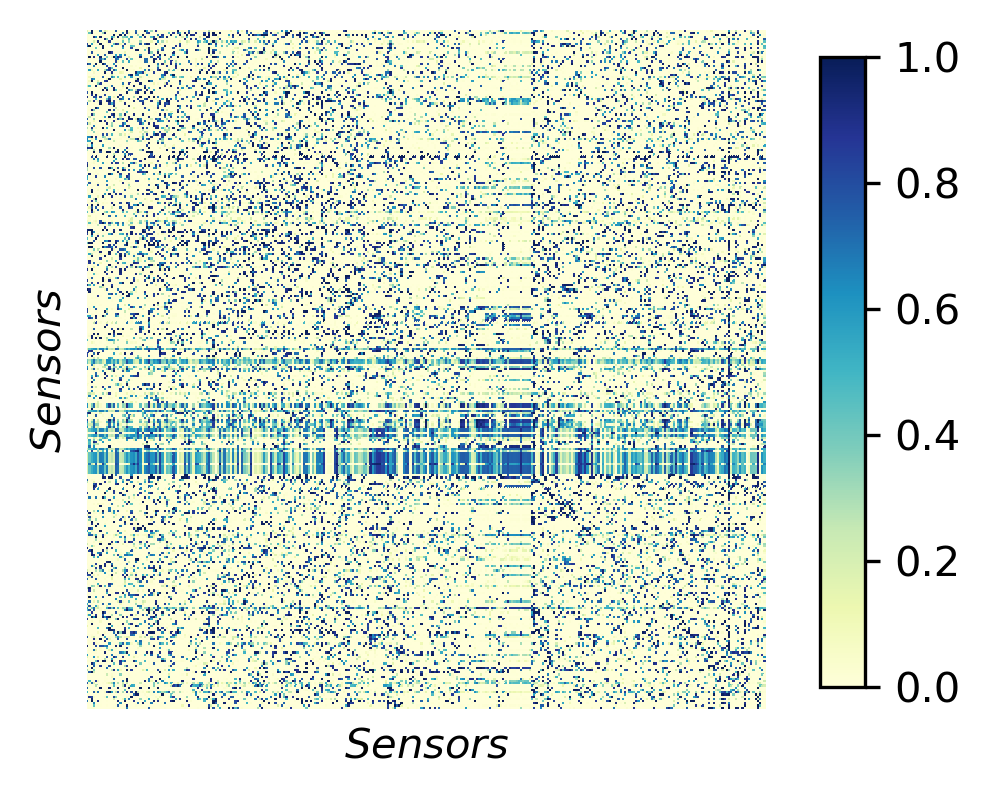}
	\end{minipage}
	}
	\ \ \ \ \ 
	\subfigure[The initial and final states of the second graph adjacency matrix.]{
	\begin{minipage}[b]{.45\linewidth}
	\centering
	\includegraphics[width=1\columnwidth]{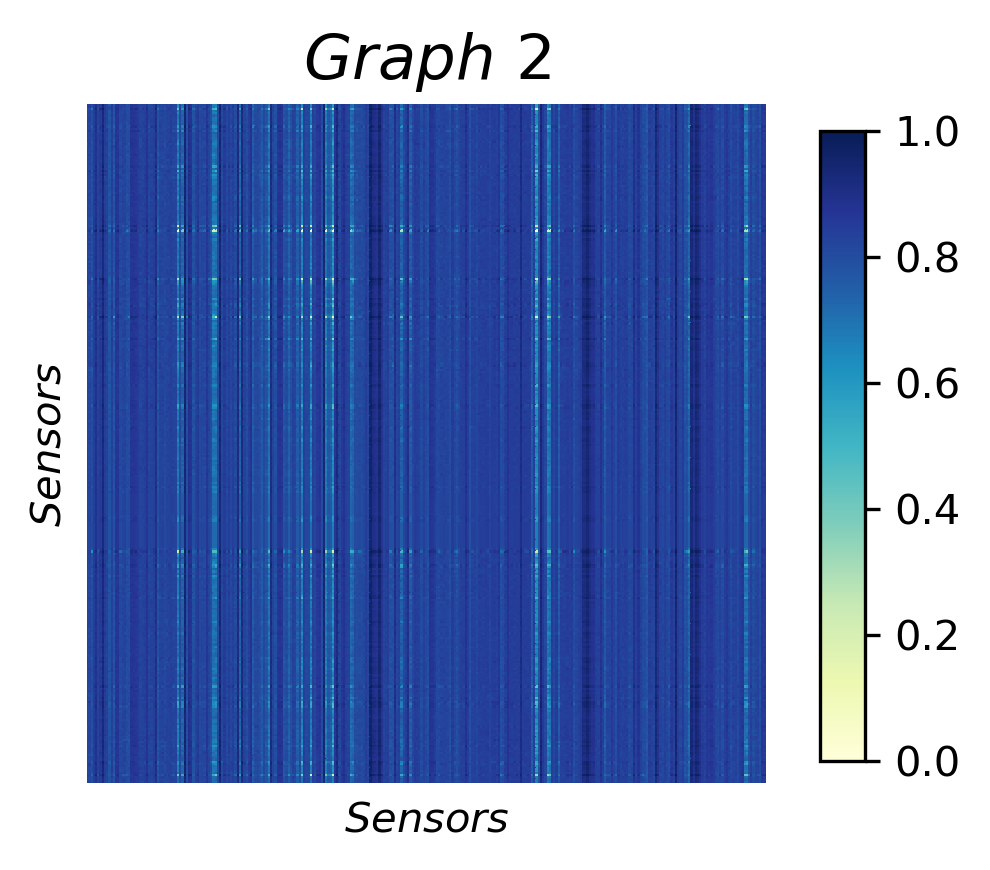}\\
	\includegraphics[width=1\columnwidth]{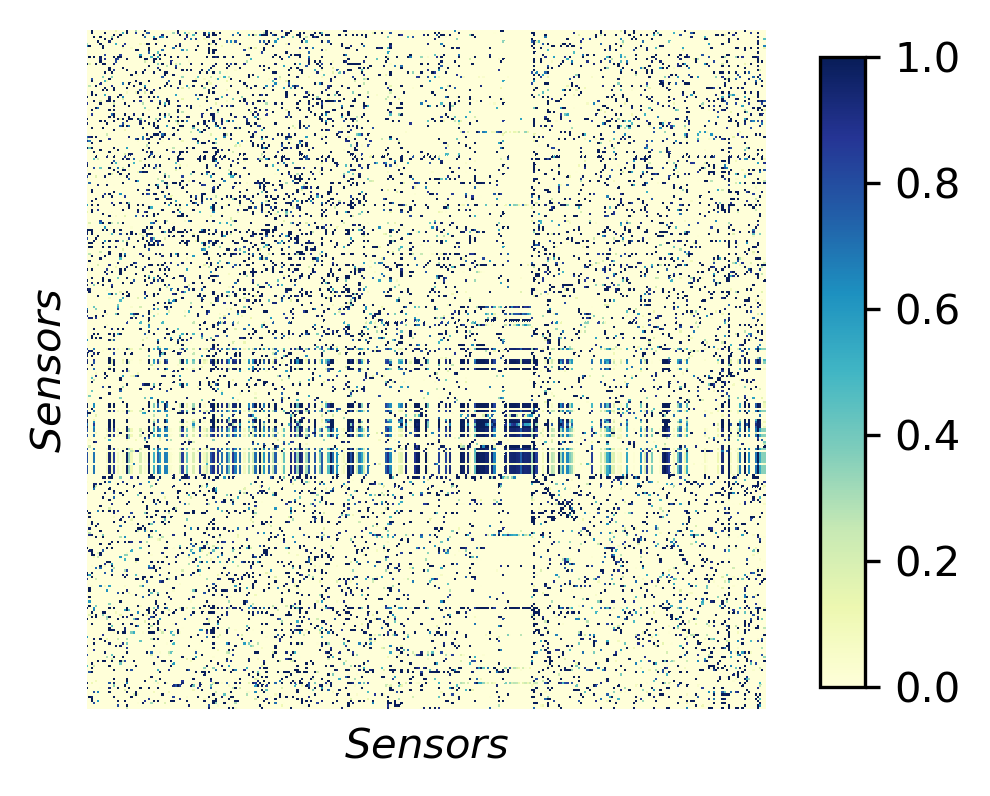}
	\end{minipage}
	}
    \caption{ The adjacent matrices obtained from the graph structure learning module on the PEMS04 dataset.}
    \label{heat}
\end{figure}

\section{Conclusion}
In this paper, we propose a novel model that joins graph structure learning and forecasting. Our model highlights three critical issues with previous models and provides concise and practical solutions. Our research emphasizes that the training multivariate time series can be applied to generate a specified number of valid graph structures, and the optimal spatial structure can be selected by computing the similarity of each input multivariate time series. With our unique design of multi-graph generation network and graph selection module, our model is well balanced between efficiency and flexibility.  Experiments were conducted on four real-world datasets to demonstrate the superiority of our proposed model. The well-trained embeddings and learned graphs could also be potentially applied to other tasks.

	\appendix

	\section{Description of SSU module}\label{effect}
	    \begin{figure}[h]
	\centering
	\subfigure[The smooth function $\varphi(x), \alpha = 1$.]{
	\includegraphics[width=0.47\columnwidth]{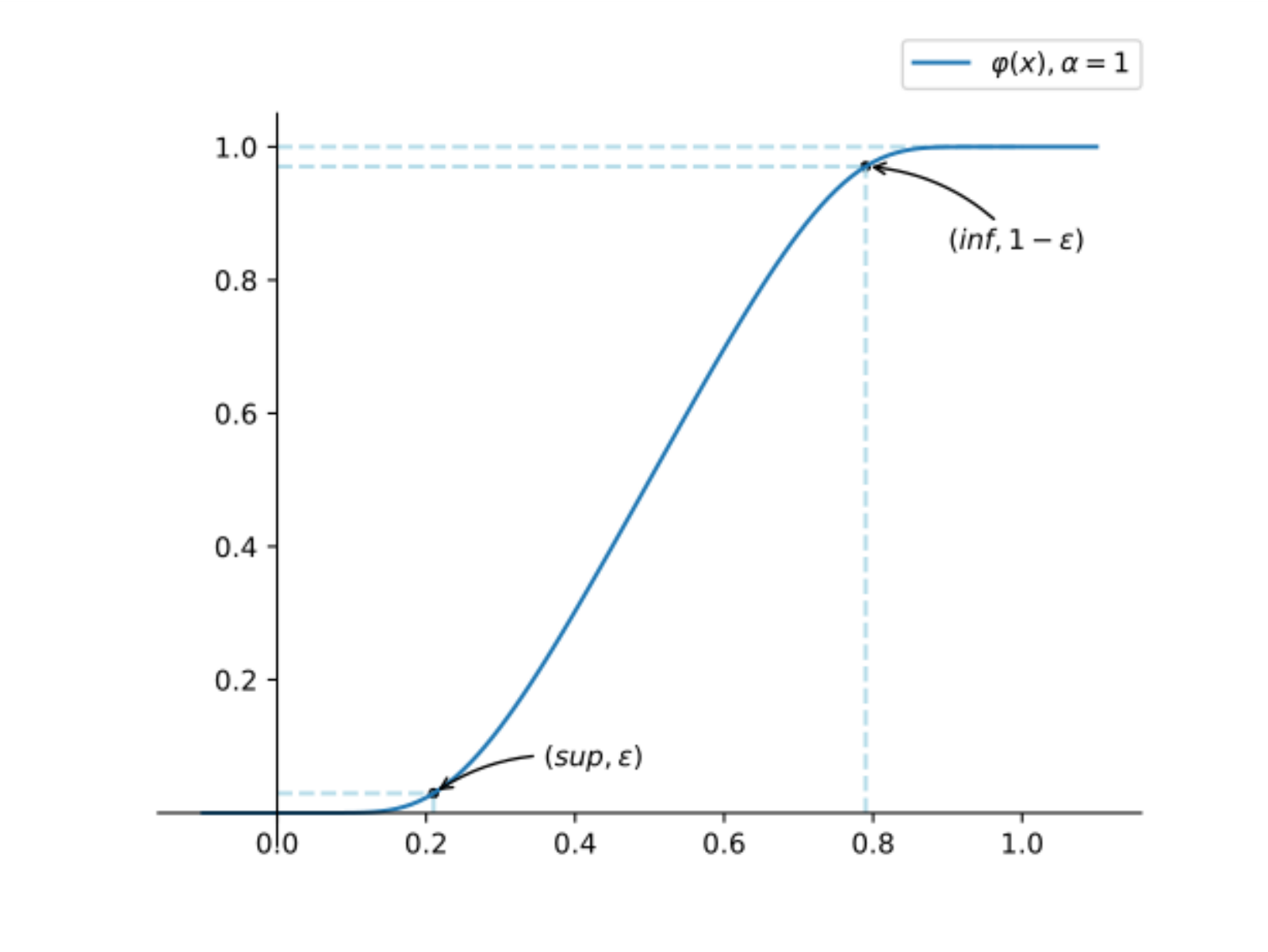}
	\label{SSU}
	}
	\subfigure[The curves of different $\alpha$ values.]{
	\includegraphics[width=0.47\columnwidth]{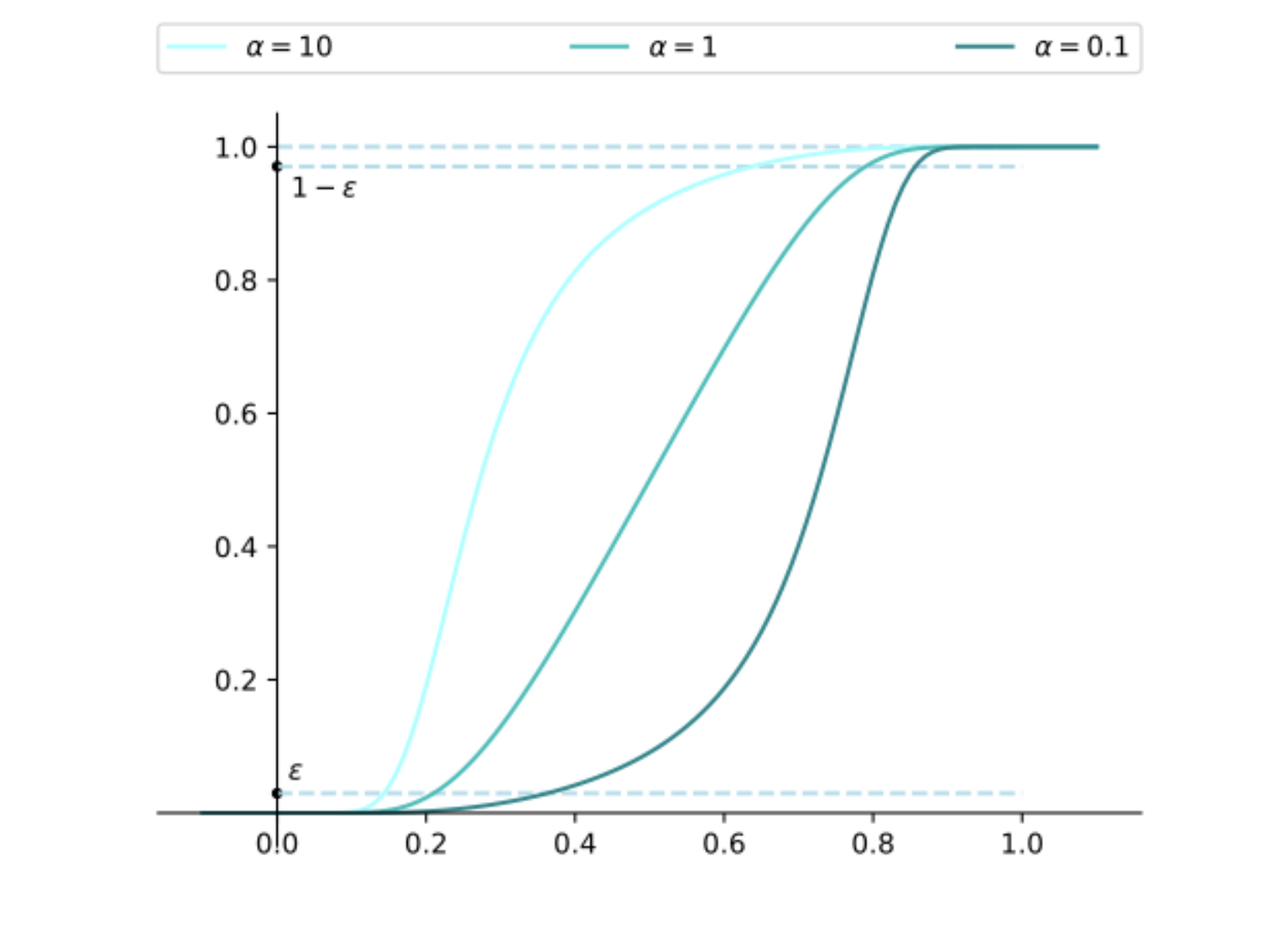}
	\label{alpha}}
	\caption{Basic curves of SSU.}
	\end{figure} 
	\subsection{Sparsification coefficient}
	As we defined in context, let
	\begin{equation}
	\begin{aligned}
     &f(x)=\left\{
    \begin{aligned}
    &e^{-\frac{1}{x}} &(x>0), \\
    &0 &(x \leq 0), 
    \end{aligned}
    \right.
    \\
    &\varphi(x)=\dfrac{\alpha f(x)}{\alpha f(x)+f(1-x)}\ (\alpha \in \mathbb{R}_+),
    \end{aligned}
    \end{equation}

	where parameter $\alpha$ is the sparsification coefficient. It can determine the shape of the curve $\varphi$ and the sparse degree of the generated adjacency matrices.
	The sparsification effect of SSU is described below.
	
	It is obvious that $\varphi(x) \equiv 0$, $\varphi'(x) \equiv 0$ for $x \leq 0$; $\varphi(x) \equiv 1$, $\varphi'(x) \equiv 0$ for $x \geq 1$. So we just consider $0<x<1$, and let $t = \dfrac{f(1-x)}{f(x)}=e^{\frac{1}{x}-\frac{1}{1-x}} := g(x)$. For $g'(x) = e^{\frac{1}{x}-\frac{1}{1-x}}\left[-\dfrac{1}{x^2}-\dfrac{1}{(1-x)^2}\right]<0$, $g(x)$ decreases strictly monotonically on $(0,1)$. Thus $g$ is a bijection and has an inverse function $g^{-1}$. \\
	For $\varphi(x)<\varepsilon$, i.e.$\dfrac{\alpha}{\alpha+t} < \varepsilon$,\\ $t>\alpha\left(\dfrac{1}{\varepsilon}-1\right) \iff x<g^{-1}\left(\alpha\left(\dfrac{1}{\varepsilon}-1\right)\right)\triangleq\text{sup}$.\\
	For $\varphi(x)>1-\varepsilon$, i.e.$\dfrac{\alpha}{\alpha+t} > 1 - \varepsilon$,\\ $t<\alpha\left(\dfrac{1}{1-\varepsilon}-1\right) \iff x>g^{-1}\left(\alpha\left(\dfrac{1}{1-\varepsilon}-1\right)\right)\triangleq\text{inf}$.\\
	
	In Figure~\ref{alpha}, fixing $\varepsilon$, as $\alpha$ decreases, sup, inf increase and the length of interval  $\varphi^{-1}((0,\varepsilon))=(0,\text{sup})$ increase, and vice versa. If we consider the elements in the adjacency matrix $A=(A_{ij})_{n \times n}$ have a uniform distribution in $[0,1]$, then as $\alpha$ decreases, the probability of $a_{ij}$ falling into $(0,\text{sup})$ and $A$ being sparse increases. Therefore, we get the conclusion that $\alpha$ can control the sparsification effect of SSU.
	\subsection{Gradient redefinition}
	In our experiments, as $x$ approaches 0 and 1, the gradient approaches 0 rapidly, which leads to the vanishing gradient problem. In fact, it is extensively difficult to train the adjacency matrix values to zero and achieve the sparsification effect. Therefore, we redefine the gradient as 1 in intervals $(0,\text{sup})$ and $(\text{inf},1)$ to accelerate convergence making the activation value fall into $\{0,1\}$ or $(\varphi(\text{sup}),\varphi(\text{inf}))$ faster.

\bibliographystyle{splncs04}
\bibliography{ECML}
%
% ---- Bibliography ----
%
% BibTeX users should specify bibliography style 'splncs04'.
% References will then be sorted and formatted in the correct style.
%
% \bibliographystyle{splncs04}
% \bibliography{mybibliography}
%

\end{document}